\tikzset{state/.style={circle, draw, minimum size=0.5cm}}
\newtheorem{definition}{Definition}
\newtheorem{theorem}{Theorem}
\newtheorem{remark}{Remark}
\newtheorem{lemma}{Lemma}
\newtheorem{problem}{Problem}
\newtheorem{corollary}{Corollary}
\crefname{section}{Section}{Sections}
\crefname{subsection}{Section}{Sections}
\crefname{definition}{Definition}{Definitions}
\crefname{problem}{Problem}{Problems}
\crefname{proposition}{Proposition}{Propositions}
\crefname{lemma}{Lemma}{Lemmas}
\crefname{theorem}{Theorem}{Theorems}
\crefname{corollary}{Corollary}{Corollaries}
\crefname{example}{Example}{Examples}
\crefname{figure}{Figure}{Figures}
\crefname{table}{Table}{Tables}
\crefname{assumption}{Assumption}{Assumptions}
\crefname{remark}{Remark}{Remarks}
\crefname{running}{Running Example}{Running Examples}
\crefname{algorithm}{Algorithm}{Algorithms}
\newcommand{\nat}{\mathbb{N}}
\newcommand{\real}{\mathbb{R}}
\newcommand{\argmax}{\mathrm{argmax}}
\newcommand{\dist}{\text{Dist}}
\newcommand{\Iff}{\text{ iff }}
\newcommand{\ex}{\mathbb{E}}
\newcommand{\G}{\Box}
\newcommand{\F}{\Diamond}
\newcommand{\X}{\bigcirc}
\newcommand{\U}{\mathbin{\mathcal{U}}}
\newcommand{\pomdp}{(S, A, T, p_0, r, \gamma, O, \Omega)}
\title{\LARGE \bf Reinforcement Learning with Temporal Logic Constraints for Partially-Observable Markov Decision Processes}
\author{Yu Wang, Alper Kamil Bozkurt, and Miroslav Pajic
\thanks{The authors are with the Pratt School of Engineering of Duke University, Durham, NC 27708, USA. Emails: {\tt\small \{yu.wang094, alper.bozkurt, miroslav.pajic\}@duke.edu}}%
\thanks{This work is sponsored in part by the ONR under agreements N00014-17-1-2504, N00014-20-1-2745 and N00014-18-1-2374, AFOSR award number FA9550-19-1-0169, and the NSF CNS-1652544 grant.}
}
\begin{document}

\maketitle
\thispagestyle{empty}
\pagestyle{empty}

\begin{abstract}
This paper proposes a reinforcement learning method for controller synthesis of autonomous systems in unknown and partially-observable environments with subjective time-dependent safety constraints. Mathematically, we model the system dynamics by a partially-observable Markov decision process (POMDP) with unknown transition/observation probabilities. The time-dependent safety constraint is captured by iLTL, a variation of linear temporal logic for state distributions. Our Reinforcement learning method first constructs the belief MDP of the POMDP, capturing the time evolution of estimated state distributions. Then, by building the product belief MDP of the belief MDP and the limiting deterministic B\"uchi automaton (LDBA) of the temporal logic constraint, we transform the time-dependent safety constraint on the POMDP into a state-dependent constraint on the product belief MDP. Finally, we learn the optimal policy by value iteration under the state-dependent constraint. 
\end{abstract}

\section{Introduction} \label{sec:intro}

Reinforcement learning methods are widely used to synthesize control policies for autonomous systems that work in unknown environments (e.g., robotics and unmanned vehicles)~\cite{sutton_ReinforcementLearningIntroduction_2018}. Among them, model-free reinforcement learning methods are of particular interest since they can derive control policies without identifying the complete system model~\cite{sutton_ReinforcementLearningIntroduction_2018}. Instead, they can find the best control policy for a given discounted reward function by iteratively rolling out a tentative control policy and improving it based on observed system behaviors. For systems with fully observable states, algorithms have been developed for various tasks~\cite{kober_ReinforcementLearningRobotics_2013,kober_LearningMotorPrimitives_2009,levine_EndtoEndTrainingDeep_2016} including linear temporal logic tasks~\cite{li_ReinforcementLearningTemporal_2017,hahn_OmegaRegularObjectivesModelFree_2019,hasanbeig_ReinforcementLearningTemporal_2019,bozkurt_ControlSynthesisLinear_2020,bozkurt_ModelFreeReinforcementLearning_2021}.

More often than not, the states of real-world autonomous systems such as self-driving cars~\cite{hubmann_AutomatedDrivingUncertain_2018} and robots~\cite{bozkurt_SecurePlanningStealthy_2021} are not fully observable due to system/environment uncertainty (e.g., sensor noise). For these applications, a widely-used mathematical model for control synthesis is the partially observable Markov decision process (POMDP)~\cite{kaelbling_PlanningActingPartially_1998}. A POMDP generalizes a Markov decision process (MDP), whose states are partially-observable through a probabilistic relation to a set of observations. Due to the partial observability, control synthesis is considerably harder for POMDP than fully observable models like Markov decision processes. Most of the exact decision problems on POMDP are either undecidable or PSPACE-complete~\cite{madani_UndecidabilityProbabilisticPlanning_1999,papadimitriou_ComplexityMarkovDecision_1987}.

This work studies the control synthesis for POMDPs in the Bayesian framework. Instead of exhaustively consider all states agreeing with the observation (e.g., in~\cite{madani_UndecidabilityProbabilisticPlanning_1999,papadimitriou_ComplexityMarkovDecision_1987}), we based the control actions on the posterior state estimation, i.e., beliefs. For a given observation, the evolution of beliefs under the control actions is captured by a Markov decision process with infinitely many states. Thus, the Bayesian framework ``lifts'' the POMDP control problem into a (fully-observable) MDP control problem at the cost of expanding the state space. Accordingly, we value iteration methods to deal with the infinite product belief space~\cite{pineau_PointbasedValueIteration_2003,ji_PointBasedPolicyIteration_2007}.

A crucial concern in learning-based control synthesis is dynamical safety. For safety-critical systems, such as self-driving cars~\cite{hubmann_AutomatedDrivingUncertain_2018} and robots~\cite{bozkurt_SecurePlanningStealthy_2021}, designing a policy that guarantees both safety and optimality is necessary. Typically, the safety constraints are time-dependent and expressible by linear temporal logic (LTL), a set of symbols and rules for formally representing and reasoning about time-dependent properties~\cite{pnueli_TemporalLogicPrograms_1977}. For different scenarios, variations of LTL in syntax and semantics are used~\cite{ahmadi_BarrierFunctionsMultiagentPOMDPs_2020}.

This work considers safety constraints expressed by iLTL, a variation of LTL for state distributions. It has found applications in wireless sensor network~\cite{kwon_LinearInequalityLTL_2004} and cyber-physical systems~\cite{wang_StatisticalVerificationDynamical_2015,wang_VerifyingContinuoustimeStochastic_2016}. Compare to barrier certificates~\cite{ames_ControlBarrierFunctions_2019}, iLTL is more expressive for time-dependent safety constraints. We use iLTL to capture safety constraints for posterior state estimations (i.e., beliefs) of the POMDP. Namely, when synthesizing the optimal control for a given discounted reward, if we ``believe'' an action will violate the iLTL safety constraint, we should not take it.

We propose a reinforcement learning method to derive the optimal control policy for a given discounted reward under an iLTL safety constraint. By constructing the belief MDP of the POMDP, we first lift the control synthesis problem to the belief space. Then, we build the product belief MDP of the belief MDP and the limiting deterministic B\"uchi automaton (LDBA) of the temporal logic constraint. This transforms the time-dependent iLTL constraint to a state-dependent B\"uchi constraint on the product belief MDP~\cite{bozkurt_ControlSynthesisLinear_2020}. Finally, we propose a value iteration method to learn the optimal policy for the discounted reward under the B\"uchi constraint. An overview of our approach is shown by \cref{fig:overview}.

The rest of the paper runs as follows. We give the definition of POMDP in \cref{sec:prelim} and formula the control synthesis problem \cref{sec:formulation}. We introduce the LDBA and build the product belief MDP in \cref{sec:product}. Then, we introduce the value iteration under constraints and the learning algorithm in \cref{sec:value iteration}. 
Finally, we conclude this work in \cref{sec:conc}.

\begin{figure}
\centering
\includegraphics[width=0.99\columnwidth]{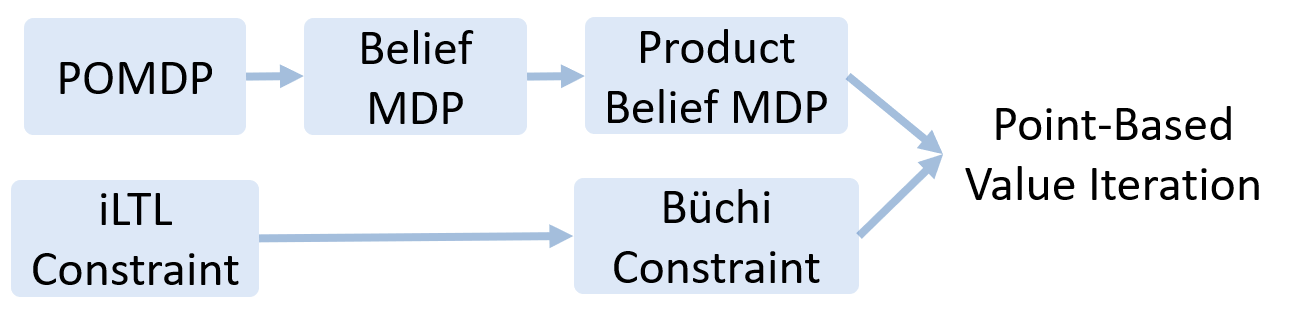}
\caption{An overview of our method.}
\label{fig:overview}
\end{figure}

\section{Preliminaries} 
\label{sec:prelim}

We model an autonomous system's dynamics in the unknown and partially-observable environment by a partially-observable Markov decision process (POMDP), where the underline dynamics is a Markov decision process, but the control can only depend on observations probabilistically related to the states.

For a finite $S$, we denote the set of probability distributions on $S$ by $\dist (S)$. A POMDP is a tuple $\mathcal{M} = \pomdp$, where
\begin{itemize}
\item $S = \{s_1, \ldots, s_n\}$ is a set of $n$ states.
\item $A = \{a_1, \ldots, a_m\}$ is a set of $m$ actions. 
\item $T: S \times A \times S \to [0,1]$ is a set of transition probabilities between states,\footnote{For simplicity, we assume that all the actions are enabled for each state.} satisfying for any $s \in S$ and any $a \in A$,
\[
\sum_{s' \in S} T (s, a, s') = 1.
\]
By taking the action $a \in A$ on the state $s \in S$, the probability of transiting to the state $s' \in S$ is $T (s, a, s')$. 
\item $p_0 \in \dist(S)$ is an initial distribution.
\item $r: S \to \real$ is the immediate reward function.
\item $\gamma: S \to (0, 1)$ is a discount factor.
\item $O = \{o_1, \ldots, o_l\}$ is a set of $l$ observations.
\item $\Omega: S \times O$ is a set of observation probabilities,\footnote{The reward function $r$ and observation probabilities $\Omega$ depending on both the states and actions can fit into this setup by augmenting the MDP states with all its state-action pairs (e.g., as in~\cite{wang_HyperpropertiesRoboticsMotion_2020}).}
satisfying for any $s \in S$
\[
\sum_{o \in O} \Omega(s, o) = 1.    
\]
On the state $s$, the probability of yielding an observation $o$ is $\Omega(s, o)$. 
\end{itemize}
When $S = O$ and $\Omega(s, o) = 1$ if and only if $s = o$, the POMDP $\mathcal{M}$ reduces to a Markov decision process (MDP). We call a sequence of states $\sigma: \nat \to S$ a path of the POMDP if for any $t \in \nat$, there exists $a \in A$ such that $T(s(t), a, s(t+1)) > 0$. In addition, we call a sequences of state distributions $\sigma: \nat \to \dist(S)$ as an execution of the POMDP.

\section{Problem Formulation}
\label{sec:formulation}

We consider the control synthesis problem on the POMDP from \cref{sec:prelim}. Mathematically, the control policy 
\begin{equation} \label{eq:policy}
\pi: O^* \to \dist(A)
\end{equation} 
decides (probabilistically) the action to take from the history of observations. We denote the POMDP under the control policy by $\mathcal{M}_\pi$ and a random path drawn from the controlled POMDP by $\sigma \sim \mathcal{M}_\pi$. The control goal is to maximize the expected cumulative reward of the random path $\sigma$; equivalently, we look for the optimal control policy $\pi^*$ such that
\begin{equation} \label{eq:maximize}
\pi^* = \argmax_\pi \ex_{\sigma \sim \mathcal{M}_\pi} R (\sigma),
\end{equation} 
where     
\begin{equation} \label{eq:path reward}
R(\sigma) = \sum_{i=1}^\infty \gamma^t r(\sigma(t)).
\end{equation}

\subsection{Belief Markov Decision Processes}
\label{sub:belief mdp}

The states are not directly observable in a POMDP. But, from a history of observation, one can derive a probabilistic estimation of the states from the history of observations is called a belief $b \in \dist(S)$. At $t=0$, the belief $b_0$ is the initial distribution, i.e., $b_0 = p_0$. For $t>0$, the belief $b_t$ updates upon observing $o$ by the Bayes rule 
\begin{equation} \label{eq:belief update}
\begin{split}
b_{t+1} (s') = \frac{\Omega (s', o) \sum_{s \in S} T(s, a, s') b_t (s)}{\sum_{s' \in S} \Omega (s', o) \sum_{s \in S} T(s, a, s') b_t (s)}
\end{split}
\end{equation}

Following~\eqref{eq:belief update}, the POMDP $\mathcal{M}$ induces a Markov decision process (MDP) on the belief space $\dist (S)$ where each belief (i.e., state distribution) is a state.%
\footnote{The resulting belief MDP is a continuous state space, even if the ``originating'' POMDP has a finite number of states}
It shows how the belief changes for taking different actions. The belief MDP is fully observable.

\begin{definition} \label{def:belief mdp}
The belief MDP $\mathcal{D}$ of the POMDP $\mathcal{M} \allowbreak = (S, A, T, p_0, r, \gamma, O, \Omega)$ is defined by $\mathcal{D} = (\dist(S), \allowbreak A, T_\mathcal{D}, p_0, r_\mathcal{D}, \gamma)$ where
\[
T_\mathcal{D} (b, a, b') = \sum_{o \in O} 
\sum_{s' \in S} \sum_{s \in S} \eta(b, o, b') \Omega (s', o) T(s, a, s') 
b(s)
\]
with 
\[
\begin{split}
& \eta(b, o, b')
\\ & = \bigg\{
\begin{array}{ll}
1, & \text {if the belief update for } b, o \text { by~\eqref{eq:belief update} returns } b', 
\\ 0, & \text {otherwise,}\end{array}            
\end{split}
\] 
and
\[
r_\mathcal{D} (b, a)=\sum_{s \in S} b(s) r(s, a)
\]
for $b, b' \in \dist (S)$ and $a \in A$.
\end{definition}

\subsection{Time-Dependent Safety Constraints}
\label{sub:safety}

We formally capture time-dependent safety constraints by temporal logic. For the probabilistic safety related to state distributions, we introduce a variant of the linear temporal logic (LTL) to capture time-dependent specifications for wireless sensor network~\cite{kwon_LinearInequalityLTL_2004} and cyber-physical systems~\cite{wang_StatisticalVerificationDynamical_2015,wang_VerifyingContinuoustimeStochastic_2016}.

An inequality LTL (iLTL) formula is derived recursively from the rules
\begin{equation} \label{def:iltl syntax}
    \varphi \Coloneqq f 
    \mid \neg \varphi
    \mid \varphi \land \varphi 
    \mid \X \varphi 
    \mid \varphi \U_T \varphi 
\end{equation}
where 
\begin{itemize}
    \item $f: \real^n \to \real$ is a (given) function and is called an atomic proposition.
    \item $T \in \nat$ and $\X$ and $\U$ are temporal operators, 
    meaning ``next'' and ``until'', respectively.
\end{itemize}
Other common logic operators can be derived as follows: 
$\text{True} \equiv \varphi \land \neg \varphi$,
$\varphi \land \varphi' \equiv \neg(\neg \varphi \wedge \neg \varphi')$, 
$\varphi \rightarrow \varphi' \equiv \neg \varphi \land \varphi'$,
$\F_T \varphi \equiv \text{True} U_T \varphi$, 
and $\G_T \varphi \equiv \neg \F_T \neg \varphi$. 
Also, we denote $\U_T, \F_T,$ and $\G_T$ by $\U, \F$ and $\G$, respectively.
Our definition of iLTL is more general than~\cite{kwon_LinearInequalityLTL_2004,wang_StatisticalVerificationDynamical_2015,wang_VerifyingContinuoustimeStochastic_2016}, since we allow atomic proposition $f$ in~\eqref{def:iltl syntax} to be nonlinear functions.

Let $\sigma: \nat \to \dist (S)$ be a execution of the POMDP. The satisfaction (denoted by $\models$) of an iLTL formula is defined recursively by the rules
\begin{align*}
& \sigma \models f \Iff f(\sigma(0)) > 0 
\\ & \sigma \models \neg \varphi \Iff \sigma \not\models \varphi
\\ & \sigma \models \varphi_1 \land \varphi_2 \Iff \sigma \models \psi_1 \text{ and } \sigma \models \varphi_2
\\ & \sigma \models \X \varphi \Iff \sigma^{[1]} \models \varphi
\\ & \sigma \models \varphi_1 \U_T \varphi_2 \Iff \exists t \leq T. \big( \sigma^{[t]} \models \varphi \text{ and } \forall t' \leq t. \ \sigma^{[t']} \models \varphi \big)
\end{align*}
where $\sigma^{[t]}$ is the $t$-shift of $\sigma$ defined by $\sigma^{[t]}(t') = \sigma(t + t')$ for any $t' \in \nat$.

Using iLTL, we can express a wide class of time-dependent safety properties. For example, consider a general probabilistic space $S$. Let $S_1, S_2 \subseteq S$ and $I_{S_1}$ and $I_{S_2}$ be the indicator functions%
\footnote{That is, $I_{S_1} (s) = 1$ for $s \in S_1$ and $I_{S_1} (s) = 0$ otherwise.}, respectively. Then the iLTL formula
\begin{equation} \label{eq:ex1}
\varphi = \F \G (0.1 - I_{S_1}) \lor \F \G (I_{S_2} - 0.8)
\end{equation}
means that the probability of finally always staying in the (unsafe) set $S_1$ should be less than $0.1$ and the probability of finally always staying the (goal) set $S_2$ should be great than $0.8$.

\subsection{Control Synthesis Under iLTL Constraints}
\label{sub:formulation}

Based on modeling the system by a POMDP and the time-dependent safety constraint by iLTL, we formally introduce the problem formulation below.

\begin{problem} \label{problem}
Given the MDP from \cref{sec:prelim} and the iLTL constraint $\varphi$ from \cref{sub:safety}, find a policy $\pi$ in the form of~\eqref{eq:policy} maximize the expected value of the discounted reward~\eqref{eq:path reward}, while ensuring the belief sequence $b_0 b_1 \ldots$ derived from~\eqref{eq:belief update} satisfies $\varphi$.
\end{problem}

In \cref{problem}, the constraint $b_0 b_1 \ldots \models \varphi$ means when synthesizing the optimal control for a given discounted reward, if we ``believe'' an action will violate the iLTL safety constraint, we should not take it. Besides, in our Bayesian framework discussed in \cref{sub:belief mdp}, the control policy $\pi: O^* \to \dist(A)$ depends on past observations through belief updates (which are a sufficient statistic). Thus, to maximize the discounted reward~\eqref{eq:path reward} (without the safety constraint $\varphi$), we only need the belief sequence $b_0 b_1 \ldots$. In addition, since the satisfaction of the safety constraint $\varphi$ also involves the belief sequence $b_0 b_1 \ldots$, it suffices to consider a policy mapping belief sequences to actions. This is summarized by the following lemma.

\begin{lemma} \label{lem:policy}
To solve \cref{problem}, it suffices to find a policy
\begin{equation} \label{eq:policy2}
\pi: \dist (S)^* \to A.
\end{equation} 
\end{lemma}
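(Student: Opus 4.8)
The plan is to rely on the classical fact that the belief is a \emph{sufficient statistic} for the hidden state, so that the information an optimal controller needs is already contained in the belief sequence rather than in the raw observation history.

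First I would record two elementary facts. (i) By the Bayes update \eqref{eq:belief update}, with $b_0=p_0$ fixed, the belief $b_{t+1}$ is a deterministic function of $b_t$, the action taken at step $t$, and the observation $o_{t+1}$; iterating, the belief prefix $b_0\ldots b_t$ is a deterministic function of the action--observation prefix. (ii) By construction of the belief, conditioned on the observations $o_1\ldots o_t$ (and on the chosen actions, which a policy of the form \eqref{eq:policy} selects without access to states) the current state $\sigma(t)$ is distributed as $b_t$. Combining (i)--(ii) with \cref{def:belief mdp}, for any policy $\pi:O^*\to\dist(A)$ the belief sequence $B_0 B_1\ldots$ generated by $\m_\pi$ is a Markov chain on $\dist(S)$ whose one-step law, given current belief $b$ and action $a$, equals $T_{\mathcal{D}}(b,a,\cdot)$ (the marginalization over $o_{t+1}$ is already absorbed into $T_{\mathcal{D}}$). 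By the tower property this yields $\ex_{\sigma\sim\m_\pi}[\gamma^t r(\sigma(t))]=\ex[\gamma^t r_{\mathcal{D}}(B_t,A_t)]$, so the objective \eqref{eq:path reward} is a functional of the joint law of $(B_t,A_t)_{t\ge 0}$; and by the iLTL semantics, ``$b_0 b_1\ldots\models\varphi$'' is a property of the belief sequence, hence of that same law.

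Next, given any $\pi:O^*\to\dist(A)$ I would build a policy on belief histories attaining the same law: set $\pi'(b_0\ldots b_t)$ to be the conditional distribution of $A_t$ given $B_{0:t}=b_0\ldots b_t$ under $\m_\pi$ (chosen arbitrarily on zero-probability prefixes), and show by induction on $t$ --- using that the belief transition above does not depend on the policy --- that $\m_{\pi'}$ induces the same law on $(B_t,A_t)_{t\ge 0}$, hence the same expected reward and the same satisfaction of $\varphi$. A final step removes the randomization: on the belief MDP the objective is discounted and the iLTL requirement is a qualitative (hard) path constraint, so, as in the LTL-synthesis literature~\cite{bozkurt_ControlSynthesisLinear_2020}, one may replace $\pi'$ by a deterministic history-dependent selector $\pi'':\dist(S)^*\to A$ without losing optimality or feasibility --- which is exactly the form \eqref{eq:policy2}.

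I expect the induction together with this purification step to be the delicate part. Collapsing observation histories to belief histories in general only yields a policy on belief-\emph{and}-action histories, since $a_t$ need not be recoverable from $b_t$ and $b_{t+1}$; the clean remedy is to run the argument on the belief MDP with its state augmented by the last action (which leaves everything above unchanged), or to note that the conditionals defining $\pi'$ agree on the reachable prefixes. Arguing that a qualitative path constraint does not force randomization is routine but should be stated explicitly, and it becomes transparent once the LDBA product of \cref{sec:product} turns $\varphi$ into a state-dependent B\"uchi condition on which memoryless deterministic policies suffice.
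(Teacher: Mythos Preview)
Your proposal is correct and follows the same core idea as the paper --- namely, that the belief is a sufficient statistic for the hidden state, so both the discounted-reward objective and the iLTL constraint (which is stated directly on the belief sequence) depend on past observations only through the belief history. The paper, however, does not supply a formal proof of \cref{lem:policy}: the entire argument is the informal paragraph preceding the lemma, which simply asserts the sufficient-statistic property and concludes.

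Your write-up is therefore considerably more detailed than the paper's. In particular, you make explicit two points the paper glosses over: the construction of a belief-history policy $\pi'$ from an observation-history policy $\pi$ via conditional laws (and the subtlety that past actions are not in general recoverable from the belief sequence alone), and the derandomization step from $\dist(A)$-valued to $A$-valued policies. The paper handles the latter only implicitly and later, by appealing to memoryless sufficiency on the product belief MDP in \cref{cor:convert}; your forward reference to \cref{sec:product} for the purification step is exactly in that spirit. What your approach buys is a self-contained justification of the reduction; what the paper's approach buys is brevity, deferring the technical work to the product construction and the cited LTL-synthesis results.
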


\section{Product Belief MDP}
\label{sec:product}

Following \cref{lem:policy}, the control policy that solves \cref{problem} depends on the past belief sequence, so is memory-dependent; this is beyond the capability of reinforcement learning. To remove the memory dependency, we generalize the product technique from~\cite{bozkurt_ControlSynthesisLinear_2020} to belief MDPs that have infinite states.

\subsection{Limiting-Deterministic B\"uchi Automata}

An LDBA is a tuple $\mathcal{A} = (Q, \Sigma, \delta, q_0, B)$ where
\begin{itemize}
\item $Q$ is a finite set of states;
\item $\Sigma$ is a finite set of alphabets;
\item $\delta: Q \times (\Sigma \cup \{\varepsilon\}) \to Q$ is a (partial) transition function (i.e., all alphabets are allowed on each state) with $\varepsilon$ standing for the empty alphabet. 
\item $q_0 \in Q$ is an initial state;
\item $B$ is a set of accepting states.
\end{itemize}
The LDBA satisfies that 
\begin{itemize}
\item the transition $\delta$ is total except for the empty alphabet, i.e., $|\delta(q, \cdot)| = 1$ for any $q \in Q$;
\item there exists a bipartition of into an initial and accepting component, i.e., $Q = Q_A \cup Q_I$ such that
\begin{itemize}
\item transitions from the accepting component stay within it, i.e., $\delta(q, \cdot) \subseteq Q_A$ for any $q \in Q_A$;
\item the accepting states are in the accepting component, i.e., $B \subseteq Q_A$.
\item the $\varepsilon$-moves are not allowed in the accepting component, i.e., $\delta(q, \varepsilon) = \emptyset$ for any $q \in Q_A$;
\end{itemize}
\end{itemize}

We call $q: \nat \to Q$ a path of the LDBA if $q(0) = q_0$ and for any $t \in \nat$, there exists $\sigma \in \Sigma$ such that $q(t+1) = \delta (q(t), \sigma)$. The path $q$ is accepted if $\text{Inf} (q) \cup B \neq \emptyset$, i.e., some state in $B$ appears infinitely often in $q$. Accordingly, an alphabet sequence is accepted if there exists a corresponding accepted path. As a variation of the standard linear temporal logic (LTL), an iLTL formula yields a graphic representation called a limiting-deterministic B\"uchi automaton (LDBA)~\cite{sickert_LimitDeterministicBuchiAutomata_2016,bozkurt_ControlSynthesisLinear_2020}.

\begin{lemma} \label{lem:ldba}
For any iLTL formula $\varphi$, there exists an LDBA $\mathcal{A}$ (whose alphabets are sets of iLTL atomic propositions from~\eqref{def:iltl syntax}) such that a sequence $b \models \varphi$ if and only if $b$ is accepted by $\mathcal{A}$. Accordingly, we call $\mathcal{A}$ realizes $\varphi$.
\end{lemma}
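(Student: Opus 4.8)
The plan is to reduce \cref{lem:ldba} to the classical translation of LTL formulas into limit-deterministic B\"uchi automata~\cite{sickert_LimitDeterministicBuchiAutomata_2016,bozkurt_ControlSynthesisLinear_2020}, relying on the observation that iLTL is nothing but LTL interpreted over a particular, purely state-determined, valuation of its atomic propositions. First I would fix the finite set $\aps_\varphi = \{ f_1, \dots, f_k \}$ of atomic propositions occurring syntactically in $\varphi$ and take the LDBA alphabet to be $\Sigma = 2^{\aps_\varphi}$. To each execution $b = b_0 b_1 \dots$ in $\dist(S)^\omega$ I would associate the $\Sigma$-word $\lambda(b)$ defined by $\lambda(b)(t) = \{ f \in \aps_\varphi : f(b_t) > 0 \}$; this is precisely the labelling that the product construction of \cref{sec:product} feeds into the automaton, so ``$b$ is accepted by $\mathcal{A}$'' is to be understood as ``$\lambda(b)$ is accepted by $\mathcal{A}$''.

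The core of the argument is a structural induction on $\varphi$ showing that, for every execution $b$, the iLTL satisfaction $b \models \varphi$ of \cref{sub:safety} coincides with the ordinary LTL satisfaction of $\varphi$, read as an LTL formula over the propositional atoms $\aps_\varphi$, on the word $\lambda(b)$. The base case is immediate: $b \models f$ iff $f(b_0) > 0$ iff $f \in \lambda(b)(0)$, which is exactly LTL satisfaction of the atom $f$. The cases $\neg \varphi$ and $\varphi_1 \land \varphi_2$ are purely Boolean and therefore identical in the two semantics. For $\X$ and for the bounded until $\U_T$ the only fact required is that abstraction commutes with shifting, $\lambda(b^{[t]}) = (\lambda(b))^{[t]}$, which is immediate from the definitions; granted this, the temporal clauses of the two semantics match term by term. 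If one prefers to argue within a standard LTL syntax, I would first rewrite each bounded operator by its finite unfolding, $\varphi_1 \U_0 \varphi_2 \equiv \varphi_2$ and $\varphi_1 \U_T \varphi_2 \equiv \varphi_2 \lor \big( \varphi_1 \land \X (\varphi_1 \U_{T-1} \varphi_2) \big)$, which leaves everything else untouched and, \emph{mutatis mutandis}, covers the unbounded operators $\U, \F, \G$ as well.

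It then remains to invoke the known fact that every LTL formula over $\aps_\varphi$ admits an LDBA $\mathcal{A}$ over $\Sigma = 2^{\aps_\varphi}$ accepting exactly the words $w \in \Sigma^\omega$ with $w \models \varphi$~\cite{sickert_LimitDeterministicBuchiAutomata_2016}. Chaining this with the induction gives, for every execution $b$, that $b \models \varphi$ iff $\lambda(b)$ is accepted by $\mathcal{A}$, i.e.\ iff $b$ is accepted by $\mathcal{A}$; and by construction the alphabet of $\mathcal{A}$ consists of sets of iLTL atomic propositions, as the statement requires. I expect the only genuinely delicate point to be the induction step --- one must confirm that the truth value of every atom depends on the current distribution alone, so that passing to the $\Sigma$-abstraction discards nothing relevant to $\varphi$, and that shifting an execution corresponds exactly to shifting its abstraction; everything beyond that is a black-box appeal to the LTL-to-LDBA translation and introduces no new ideas.
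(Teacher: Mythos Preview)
Your proposal is correct and follows the same route the paper implicitly takes: the paper states \cref{lem:ldba} without proof, relying on the sentence immediately before it that iLTL, as a variant of LTL, inherits the LTL-to-LDBA translation of~\cite{sickert_LimitDeterministicBuchiAutomata_2016,bozkurt_ControlSynthesisLinear_2020}. Your argument simply makes this reduction explicit---fixing $\Sigma = 2^{\aps_\varphi}$, defining the abstraction $\lambda(b)(t) = \{ f : f(b_t) > 0 \}$ (which is exactly the labelling $L(b)$ the paper uses in~\eqref{eq:labels}), and verifying by structural induction that iLTL satisfaction on $b$ coincides with ordinary LTL satisfaction on $\lambda(b)$---so there is no methodological difference, only a difference in the level of detail supplied.
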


For example, an LDBA realizing the iLTL formula~\eqref{eq:ex1} is shown by \cref{fig:ex1}.

\begin{figure}
\centering
\includegraphics[width=0.7\columnwidth]{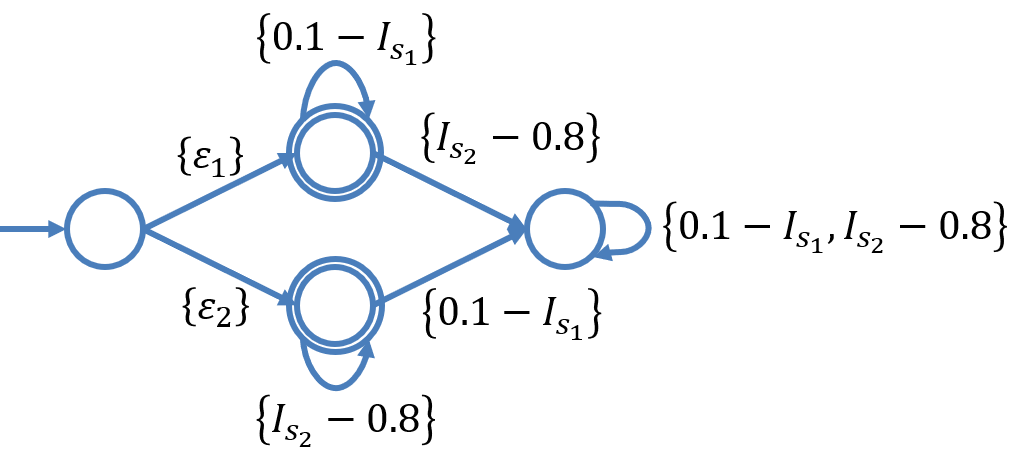}
\caption{The figure shows an LDBA for the iLTL formula~\eqref{eq:ex1}. We index the empty alphabets $\varepsilon_1$ and $\varepsilon_2$ to distinguish. The accepting states are double circled.}
\label{fig:ex1}
\end{figure}

\subsection{Product Belief MDP} 
\label{sub:product belief mdp}

\begin{definition} \label{def:product}
The product belief MDP $\mathcal{D}^\times = \mathcal{D} \times \mathcal{A}$ of the belief MDP $\mathcal{D} = (\dist(S), A, T_\mathcal{D}, p_0, \allowbreak r_\mathcal{D}, \gamma)$ and an LDBA $\mathcal{A} = (Q, \Sigma, \delta, q_0, B)$ is defined by $\mathcal{D}^\times = (\dist(S) \times Q, A \cup \{\varepsilon\}, T^\times, (p_0, q_0), r^\times, \gamma)$ where
\begin{align}
& T^\times \big( (b, q), a, (b', q') \big) =
\notag \\ & \qquad
\begin{cases}
    T_\mathcal{D} (b, a, b') & \text{if } a \in A,  q' = \delta(q, L(b)) \\
    1 & \text{if } a = \varepsilon, q' \in \delta(q, \varepsilon), b = b' \\
    0 & \text{otherwise}
\end{cases} \label{eq:T times}
\\ & L(b) = \{ f \text{ is an atomic proposition in } \varphi \mid f (b) > 0 \} \label{eq:labels}
\end{align}
and
\begin{equation} \label{eq:product reward}
r^\times \big( (b, q) \big) = r_\mathcal{D} (b).    
\end{equation}
In addition, let 
\begin{equation} \label{eq:b states}
    B^\times = \{ (b, q) \in \dist(S) \times Q \mid q \in B \}.
\end{equation}
\end{definition}

A path $(b_0, q_0) (b_1, q_1) \ldots$ of the product belief MDP corresponds uniquely to the combination of a path $b_0 b_1 \ldots$ of the belief MDP and a path $q_0 q_1 \ldots$ of the LDBA; and vice versa. Following~\eqref{eq:b states}, the path $q_0 q_1 \ldots$ is accepted by the LDBA (i.e., some states in $B$ appears in it infinitely often) if and only if some states in $B^\times$ appear infinitely often in the product path $(b_0, q_0) (b_1, q_1) \ldots$. Also, the reward~\eqref{eq:product reward} for the path $(b_0, q_0) (b_1, q_1) \ldots$ is equal to the reward~\eqref{eq:path reward} for the path $(b_0, q_0) (b_1, q_1) \ldots$. The existence of $\varepsilon$ moves in the LDBA (and the $\varepsilon$ actions in the product belief MDP) does not affect the path correspondence. Thus, we can reduce the iLTL constraint on the belief MDP to a B\"uchi constraint (i.e., visiting certain states infinitely often) on the product belief MDP, as stated by the following lemma.

\begin{theorem} \label{thm:convert}
If $\mathcal{D}^\times = \mathcal{D} \times \mathcal{A}$, then
\begin{equation} \label{eq:convert}
\max_{\pi} \mathbb{P}_{\sigma \sim \mathcal{D}_{\pi}} (\sigma \models \varphi) = \max_{\pi^\times} \mathbb{P}_{\sigma^\times \sim \mathcal{D}^\times_{\pi^\times}} (\sigma^\times \models \G \F B^\times \big)
\end{equation}
and
\begin{align} \label{eq:convert2}
& \max_{\pi} \mathbb{E}_{\sigma \sim \mathcal{D}_{\pi}} ( \sum_{i=1}^\infty \gamma^t r(\sigma(t)) ) 
\notag \\ & \qquad = \max_{\pi^\times} \mathbb{E}_{\sigma^\times \sim \mathcal{D}^\times_{\pi^\times}} ( \sum_{i=1}^\infty \gamma^t r^\times (\sigma^\times (t)) ),
\end{align}
where $\pi$ is of the form~\eqref{eq:policy2}; and similarly $\pi^\times$.  
\end{theorem}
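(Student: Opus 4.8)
The plan is to derive both identities from the path-level correspondence recorded after \cref{def:product} together with \cref{lem:ldba}, transferring policies between $\mathcal{D}$ and $\mathcal{D}^\times$ in both directions. By \cref{lem:policy} it suffices throughout to range over belief-history policies $\pi : \dist(S)^* \to A$ on $\mathcal{D}$ and $\pi^\times : (\dist(S)\times Q)^* \to A\cup\{\varepsilon\}$ on $\mathcal{D}^\times$; each induces a probability measure on the Borel space of infinite paths, under which the quantities in \eqref{eq:convert}--\eqref{eq:convert2} are well defined.

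The identity \eqref{eq:convert2} is the routine one. The product does not alter the belief dynamics: projecting away the automaton component (the $\varepsilon$-steps being transparent to the path correspondence) sends the path distribution of $\mathcal{D}^\times_{\pi^\times}$ to that of $\mathcal{D}_{\pi}$ for the induced belief-history policy $\pi$, and conversely any $\pi$ lifts to an $\varepsilon$-free $\pi^\times$ realizing the same belief-path distribution (the automaton state being a deterministic function of the belief history via the total part of $\delta$). Since $r^\times\big((b,q)\big)=r_\mathcal{D}(b)$ depends only on the belief, the correspondence preserves the discounted return, so the two maxima coincide; the only delicate point is the bookkeeping of $\varepsilon$-steps in the discount index, handled by the same convention used to state the correspondence (equivalently, restricting the right-hand maximum to $\varepsilon$-free $\pi^\times$, which loses nothing since $q$ does not enter $r^\times$).

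For \eqref{eq:convert} we prove the two inequalities. ``$\ge$'': take $\pi^\times$ attaining the right-hand side and let $\pi$ be the induced belief-history policy on $\mathcal{D}$, which realizes the same distribution over belief paths. By the path correspondence every product path with $\sigma^\times\models\G\F B^\times$ carries an accepting LDBA run over its belief projection $b_0b_1\ldots$, so $b_0b_1\ldots$ is accepted by $\mathcal{A}$ and, by \cref{lem:ldba}, satisfies $\varphi$; hence $\mathbb{P}_{\sigma\sim\mathcal{D}_{\pi}}(\sigma\models\varphi)=\mathbb{P}_{\sigma^\times\sim\mathcal{D}^\times_{\pi^\times}}(\text{belief projection}\models\varphi)\ge\mathbb{P}_{\sigma^\times\sim\mathcal{D}^\times_{\pi^\times}}(\sigma^\times\models\G\F B^\times)$. ``$\le$'': given $\pi$ attaining the left-hand side, build $\pi^\times$ that applies $\pi$'s action on the belief component (so the belief-path distribution is unchanged) and resolves the automaton nondeterminism causally. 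Here the limiting-determinism of $\mathcal{A}$ is essential: the only nondeterministic choices along a run are the $\varepsilon$-transitions, which can occur only in $Q_I$ and which on an accepting run are taken only finitely often before the run is committed to the accepting component $Q_A$; after that the automaton run is forced, and ``$b_0b_1\ldots$ accepted by $\mathcal{A}$'' reduces to a B\"uchi condition on the forced run that $\pi^\times$ can monitor online. Optimizing over the (stopping-time-like) rule governing the $\varepsilon$-transitions yields a $\pi^\times$ whose product path satisfies $\G\F B^\times$ on an event of probability at least $\mathbb{P}_{\sigma\sim\mathcal{D}_{\pi}}(\sigma\models\varphi)$. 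Combining the two inequalities gives \eqref{eq:convert}.

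I expect the ``$\le$'' direction of \eqref{eq:convert} to be the main obstacle. First, one must show that a causal rule for the $\varepsilon$-transitions is no worse than a prophetic one; in the finite-state case this is the substance of the limit-deterministic product constructions of \cite{sickert_LimitDeterministicBuchiAutomata_2016,bozkurt_ControlSynthesisLinear_2020}, which we must adapt. Second, because $\mathcal{D}$ has an uncountable (continuous) state space, that rule --- and hence $\pi^\times$ --- must be selected measurably, so the finite-state combinatorics has to be replaced by a measurable-selection / optional-stopping argument on the Borel path space; this, rather than the path and reward bookkeeping, is where the real work lies.
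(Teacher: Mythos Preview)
Your proposal is correct and follows the standard product-construction argument (path correspondence, \cref{lem:ldba}, and limit-determinism to resolve $\varepsilon$-moves causally) that underlies the reference the paper invokes; the paper's own proof is a one-line deferral to Theorem~1 of~\cite{bozkurt_ControlSynthesisLinear_2020}, observing only that $L(b)$ records exactly the atomic propositions holding at $b$. Your plan is therefore the same approach spelled out, and in fact more careful: you isolate the measurable-selection issue on the uncountable belief space as the genuine technical step, whereas the paper simply transplants the finite-state argument without comment.
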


\begin{proof}
Follow from the proof Theorem 1 in~\cite{bozkurt_ControlSynthesisLinear_2020}, noting that $L(b)$ in~\eqref{eq:labels} contains all atomic propositions holds on the (belief) state $b$.
\end{proof}

Furthermore, the maxima of the right-hand side of~\eqref{eq:convert} and~\eqref{eq:convert2} can be memoryless. Despite the existence of $\varepsilon$ actions, this memoryless policy maps to a memory-dependent policy on the belief MDP, which maximizes the left-hand side of~\eqref{eq:convert} and~\eqref{eq:convert2}, as formally stated below.

\begin{corollary} \label{cor:convert}
A memoryless policy maximizes the right-hand side of~\eqref{eq:convert} and~\eqref{eq:convert2}. It induces a memory-dependent policy that maximizes the left-hand side of~\eqref{eq:convert} and~\eqref{eq:convert2}.
\end{corollary}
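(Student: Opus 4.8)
The plan is to read the corollary as a consequence of \cref{thm:convert} together with two standard-flavored facts about the product belief MDP $\mathcal{D}^\times$: (i) each maximization on the right-hand side of \eqref{eq:convert} and \eqref{eq:convert2} (and the constrained problem of maximizing \eqref{eq:convert2} over the policies that maximize \eqref{eq:convert}) is attained by a deterministic \emph{memoryless} policy $\pi^\times : \dist(S)\times Q \to A\cup\{\varepsilon\}$; and (ii) any such $\pi^\times$ unfolds into a belief-history policy $\pi$ of the form \eqref{eq:policy2} with the same value on $\mathcal{D}$. Since \cref{thm:convert} equates the two sides, chaining (i) and (ii) shows that this induced $\pi$ attains the left-hand maxima, as claimed. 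I would present (i) and (ii) as two lemmas and then a one-line assembly.

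For (i) I would separate the two objectives. The discounted objective \eqref{eq:convert2} is a discounted MDP problem with reward $r^\times((b,q)) = \sum_s b(s) r(s)$, which is bounded because $S$ is finite, and discount below one; the Bellman optimality operator is then a contraction on the bounded measurable functions on $\dist(S)\times Q$, its fixed point is the optimal value, and any greedy stationary policy is optimal --- this is the standard argument for bounded-reward discounted MDPs and needs no finiteness of the state space. The objective \eqref{eq:convert} is the B\"uchi objective $\G\F B^\times$; deterministic memoryless optimal policies exist for B\"uchi (more generally $\omega$-regular) objectives on MDPs, and for the part of $\mathcal{D}^\times$ reachable from $(p_0,q_0)$ --- which is all that affects the value --- I would invoke the product-MDP argument already used in \cite{bozkurt_ControlSynthesisLinear_2020}.

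For (ii) the one real point is that, because of the $\varepsilon$-moves, the LDBA state after a belief prefix $b_0\ldots b_t$ is not a function of the prefix; it depends on when $\pi^\times$ fires $\varepsilon$-transitions. So the induced policy must internally simulate $\mathcal{D}^\times$: from $(p_0,q_0)$ it repeatedly applies $\pi^\times$, following every selected $\varepsilon$-move (deterministic once the $\varepsilon$-edges are indexed as in \cref{fig:ex1}) with the belief held fixed, and, when $\pi^\times$ selects a real action $a\in A$, it emits $a$, consumes the next belief $b_{i+1}$ of the prefix, and advances the automaton by $\delta(\cdot,L(b_i))$ with $L$ as in \eqref{eq:labels}; define $\pi(b_0\ldots b_t)$ to be the real action emitted at stage $t$. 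Then the trajectory of $\mathcal{D}_\pi$ and the belief projection of the trajectory of $\mathcal{D}^\times_{\pi^\times}$ are coupled step for step --- the $\varepsilon$-steps being silent and leaving the belief unchanged --- because $T^\times$ agrees with $T_\mathcal{D}$ on the belief component for real actions. Via \cref{lem:ldba} and the path correspondence of \cref{sub:product belief mdp} this coupling gives $\pr_{\sigma\sim\mathcal{D}_\pi}(\sigma\models\varphi) = \pr_{\sigma^\times\sim\mathcal{D}^\times_{\pi^\times}}(\sigma^\times\models\G\F B^\times)$ and $\ex_{\sigma\sim\mathcal{D}_\pi}(\sum_t \gamma^t r(\sigma(t))) = \ex_{\sigma^\times\sim\mathcal{D}^\times_{\pi^\times}}(\sum_t \gamma^t r^\times(\sigma^\times(t)))$.

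The main obstacle I anticipate is the bookkeeping in (ii) against the infinite state space: I must argue the $\varepsilon$-reconstruction always terminates, so that $\pi$ genuinely maps into $A$ rather than $A\cup\{\varepsilon\}$ --- this follows from the LDBA bipartition $Q=Q_A\cup Q_I$ and the ban on $\varepsilon$-moves in the accepting component, once we discard the degenerate case of a policy that fires $\varepsilon$'s forever without progress (never optimal, hence harmless) --- and I must check the coupling is measurable over the continuous belief space, which reduces to measurability of $L(\cdot)$ and of $T_\mathcal{D}$. In (i) the only non-routine point is memoryless optimality of the B\"uchi objective on a continuous-state MDP, which I would sidestep by working on the reachable sub-MDP and citing \cite{bozkurt_ControlSynthesisLinear_2020}.
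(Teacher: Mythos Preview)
Your proposal is correct, and it is in fact considerably more detailed than the paper's own proof, which is a one-line deferral: ``Follow from the proof of Theorem~3 in~\cite{sickert_LimitDeterministicBuchiAutomata_2016}.'' The paper simply invokes the LDBA-to-product-MDP machinery of Sickert et al.\ and leaves the reader to extract both the memoryless-optimality on $\mathcal{D}^\times$ and the back-translation to a belief-history policy. You instead unpack the argument into two explicit lemmas --- memoryless sufficiency for each objective, and the coupling construction that converts $\pi^\times$ into $\pi$ --- and you flag the genuine technical wrinkles (termination of $\varepsilon$-chains via the $Q_I/Q_A$ bipartition, measurability of $L$ and $T_\mathcal{D}$, and the infinite-state B\"uchi issue). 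What your approach buys is transparency about where the $\varepsilon$-moves go and why the induced policy lands in $A$ rather than $A\cup\{\varepsilon\}$; what the paper's approach buys is brevity. The one place both treatments are thin is memoryless optimality for the B\"uchi objective on a \emph{continuous}-state MDP: your ``work on the reachable sub-MDP'' does not obviously make the state space finite, and the cited results in~\cite{sickert_LimitDeterministicBuchiAutomata_2016,bozkurt_ControlSynthesisLinear_2020} are stated for finite products, so this remains a shared soft spot rather than a defect specific to your write-up.
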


\begin{proof}
Follow from the proof Theorem 3 in~\cite{sickert_LimitDeterministicBuchiAutomata_2016}.
\end{proof}

\section{Learning under Constraints}
\label{sec:value iteration}

The product belief MDP has an infinite states space $\dist (S) \times Q$. Thus the tabular reinforcement learning method does not apply. Here, we generalize the value iteration method~\cite{smallwood_OptimalControlPartially_1973,pineau_PointbasedValueIteration_2003} to solve for the optimal control policy under the constraint.

\subsection{Bellman Equation on Belief Space}

We define the value function for the reward $r^\times$ on the product belief space by
\begin{equation} \label{eq:VR}
V_r \big( (b, q) \big) = \max_{\pi^\times} 
\mathbb{E}_{\sigma^\times \sim \mathcal{D}^\times_{\pi^\times, (b, q)}} 
\sum_{t \in \nat} \gamma^t r^\times (\sigma(t)) 
\end{equation}
where $\sigma^\times \sim \mathcal{D}^\times_{\pi^\times, (b, q)}$ is a random path drawn from $\mathcal{D}^\times$ under the policy $\pi$ from the product state $(b, q)$. The value function $V_r$ captures the maximal expected value of the reward~\eqref{eq:path reward} if started from the product state $(b, q)$. By~\cref{thm:convert}, $V_r (p_0, q_0)$ is the maximal expected reward of~\eqref{eq:path reward} on the POMDP $\mathcal{M}$.   

In addition, we define the value function for the B\"uchi constraint on the product belief space by
\begin{equation} \label{eq:VP}
V_p \big( (b, q) \big) = \max_{\pi^\times} \mathbb{P}_{\sigma \sim \mathcal{D}^\times_{\pi^\times, (b, q)}} 
\big( \sigma \models \G \F B^\times \big )
\end{equation}
The value function $V_p$ captures the maximal satisfaction probability of the B\"uchi constraint $\G \F B^\times$ if started from the product state $(b, q)$. By~\cref{thm:convert}, $V_p (p_0, q_0)$ is the maximal satisfaction probability of the iLTL safety constraint $\varphi$ on the POMDP $\mathcal{M}$.

From~\cref{cor:convert}, it suffices to consider pure and memoryless policy to maximize the two wo values functions~\eqref{eq:VR} and~\eqref{eq:VP}. Accordingly, they satisfy the following Bellman equations
\begin{align} 
& V_r \big( (b, q) \big) = \max_{a \in A} Q_r \big( (b, q), a \big) \label{eq:BellmanR_0}
\\ & Q_r \big( (b, q), a \big) = r^\times \big( (b, q) \big) + \gamma \sum_{q' \in Q} \int_{b' \in \dist(S)}
\notag \\ & \qquad T^\times \Big( \big( (b, q) \big), a, \big( (b', q') \big) \Big) V_r \big( (b', q') \big) \mathrm{d} b' \label{eq:BellmanR}
\end{align}
and 
\begin{align} \label{eq:BellmanP}
& V_p \big( (b, q) \big) = \max_{a \in A} Q_p \big( (b, q), a \big) 
\notag \\ & Q_p \big( (b, q), a \big) = \sum_{q' \in Q} \int_{b' \in \dist(S)} 
\notag \\ & \qquad  T^\times \Big( \big( (b, q) \big), a, \big( (b', q') \big) \Big) V_p \big( (b', q') \big) \mathrm{d} b'
\end{align}
For solving~\eqref{eq:BellmanP}, we also have 
\begin{align} \label{eq:BellmanP_2}
V_p \big( (b, q) \big) = 1 \text{ for } (b, q) \in B^\times
\end{align}
where $B^\times$ is given by~\eqref{eq:b states}, according to iLTL syntax.

\subsection{Value Iteration}

Our learning method aims to find solutions to the Bellman equations~\eqref{eq:BellmanR} and~\eqref{eq:BellmanP} by sampling, without using knowledge on the transition probabilities $T^\times$. Since the product belief space $\dist(S) \times Q$ is infinite, tabular learning methods are not applicable. Instead, we generalize the value iteration method~\cite{smallwood_OptimalControlPartially_1973} to the product belief space. Our method is based on the fact that the value functions and Q-functions are piecewise linear, as stated below.

\begin{lemma} \label{lem:piecewise linear}
The value functions $V_r \big( (b, q) \big)$ and $V_p \big( (b, q) \big)$ and Q-functions $Q_r \big( (b, q), a \big)$ and $Q_p \big( (b, q), a \big)$  are convex and piecewise linear in $b$.
\end{lemma}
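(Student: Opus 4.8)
The plan is to establish the piecewise-linearity and convexity by induction on the value-iteration recursion, following the classical argument of Smallwood and Sondik~\cite{smallwood_OptimalControlPartially_1973} adapted to the product space $\dist(S)\times Q$. The key observation is that the finite automaton component $q$ only indexes a finite family of functions of $b$, so it suffices to show that for each fixed $q$ the relevant functions are convex piecewise linear (equivalently, each is a pointwise maximum of finitely many affine functions of $b$, i.e.\ representable by a finite set of ``$\alpha$-vectors'' in $\real^n$). I would treat $V_r$ and $Q_r$ first, then $V_p$ and $Q_p$ essentially identically, since the only structural difference is the absence of the immediate-reward term and the boundary condition~\eqref{eq:BellmanP_2}.

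First I would set up the value-iteration operators underlying~\eqref{eq:BellmanR_0}--\eqref{eq:BellmanR} and~\eqref{eq:BellmanP}: define $V^{(0)}\equiv 0$ (or, for $V_p$, the indicator of $B^\times$) and $V^{(k+1)}(b,q)=\max_{a\in A}Q^{(k)}((b,q),a)$. The inductive hypothesis is that $V^{(k)}(\cdot,q)=\max_{\alpha\in\Gamma_q^{(k)}}\langle\alpha,b\rangle$ for a finite set $\Gamma_q^{(k)}\subseteq\real^n$. For the inductive step I would expand the integral in~\eqref{eq:BellmanR} using the explicit form of $T^\times$ from~\eqref{eq:T times}: for $a\in A$ the successor belief $b'$ is a deterministic function of $(b,o)$ via the Bayes update~\eqref{eq:belief update}, and summing over $o\in O$ collapses the integral into a finite sum; crucially the denominator of the Bayes update~\eqref{eq:belief update} cancels against the factor $\sum_{s'}\Omega(s',o)\sum_s T(s,a,s')b(s)$ appearing in $T_\mathcal{D}$, so each term $T_\mathcal{D}(b,a,b^{(o)})\langle\alpha, b^{(o)}\rangle$ is in fact a \emph{linear} function of $b$. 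Hence $Q_r(\cdot,q,a)$ is a finite max of affine functions, and taking the max over the finitely many actions $a$ (and over $q'=\delta(q,L(b))$, which also only depends on $b$ through finitely many halfspaces $\{f(b)>0\}$) preserves the convex piecewise-linear form. The $\varepsilon$-actions contribute a term $V(b,q')$ with the \emph{same} $b$, which again preserves the structure. Finally, a standard monotone-contraction / uniform-convergence argument (the Bellman operator is a $\gamma$-contraction in sup-norm for $V_r$; for $V_p$ one uses monotone convergence of the increasing sequence $V_p^{(k)}$) shows the limits $V_r,V_p$ are uniform limits of convex piecewise-linear functions that are, moreover, bounded below in ``complexity'' so that the limit itself is convex and piecewise linear on each fixed $q$-slice; and convexity is preserved under pointwise (even uniform) limits. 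The $Q$-functions inherit the property from the corresponding Bellman right-hand sides.

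The main obstacle I anticipate is the passage to the limit: each finite-horizon iterate is a max of finitely many affine pieces, but the number of pieces can grow with $k$, so one must argue that the infinite-horizon value function is \emph{still} piecewise linear (finitely many pieces) rather than merely convex. For the discounted reward $V_r$ this is the classical subtlety handled in~\cite{smallwood_OptimalControlPartially_1973,pineau_PointbasedValueIteration_2003} and one can either invoke their result directly (since the $q$-component is just a finite tag) or, if only convexity plus piecewise-linearity-in-the-limit is needed for the subsequent algorithm, note that uniform limits of convex functions are convex and that the algorithm in the next section in fact works with the convex piecewise-linear \emph{approximations}. For $V_p$, the reachability/B\"uchi value is well known to be convex piecewise linear on the belief simplex as well, with the boundary constraint~\eqref{eq:BellmanP_2} simply adding the affine piece $\langle\mathbf{1},b\rangle\equiv 1$ on the slices $q\in B$. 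I would therefore present the induction in detail for the finite-horizon iterates, state convexity of the limit, and cite the POMDP value-iteration literature for the finiteness of the piecewise-linear representation in the infinite-horizon case.
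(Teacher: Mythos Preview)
Your proposal is correct and follows essentially the same approach as the paper: observe that for each fixed $q\in Q$ the problem reduces to the classical POMDP value-function structure, then invoke Smallwood--Sondik~\cite{smallwood_OptimalControlPartially_1973}. The paper's own proof is literally that one sentence; you have simply unpacked the induction and the denominator-cancellation argument that the citation encapsulates, and you correctly flag the finite-horizon vs.\ infinite-horizon subtlety that the paper leaves implicit. One minor slip: you call the regions $\{f(b)>0\}$ ``halfspaces,'' but the paper allows nonlinear atomic propositions $f$, so these regions need not be polyhedral; this does not affect convexity, and within each region of constant $L(b)$ your argument still goes through.
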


\begin{proof}
It suffices to prove the statement for each $q \in Q$, which follows directly from~\cite{smallwood_OptimalControlPartially_1973}.
\end{proof}

Based on \cref{lem:piecewise linear}, we can represent the Q-function $Q_r$ (and the same for $Q_p$) on the product space by 
\begin{align} 
& Q_r \big( (b, q), a \big) = \max_{\theta \in \Theta_{q, a}} \sum_{s \in S} \theta(s) b(s) \label{eq:piecewise linear}
\\ & V_r \big( (b, q) \big) = \max_{\theta \in \Theta_{q}} \sum_{s \in S} \theta(s) b(s)
\end{align}
where each $\Theta_{q}$ and $\Theta_{q, a}$ (for $q \in Q$ and $a \in A$) is a finite set of $|S|$-dimensional vectors, which define hyperplanes on the belief space. Following~\eqref{eq:BellmanR_0}, we can take
\begin{equation} \label{eq:theta q}
\Theta_{q} = \cup_{a \in A } \Theta_{q, a}.
\end{equation}

Our goal is to identify the set $\Theta_{q, a}$ (thus $\Theta_{q}$). By plugging~\eqref{eq:piecewise linear}-\eqref{eq:theta q} into the the Bellman equation~\eqref{eq:BellmanR} and using \cref{def:product}, we have  
\begin{align} \label{eq:qr update}
Q_r \big( (b, q), a \big) = & \sum_{s \in S} r(s) b(s) + \gamma \int_{b' \in \dist(S)}
\notag \\ & T_\mathcal{D} (b, a, b') \max_{\theta \in \Theta_{q'}} \sum_{s \in S} \theta(s) b'(s) \mathrm{d} b'
\end{align}
where $q' = \delta(q, L(b))$. The updated $Q_r$ by~\eqref{eq:qr update} is again piecewise linear and convex in $b$.

The update rule~\eqref{eq:qr update} is not directly usable since the belief transition probabilities $T_\mathcal{D}$ are unknown. For learning, we can replace $T_\mathcal{D}$ with its empirical estimation. Suppose starting from the belief state $b$, we take the action $a$ repeatedly for $n$ times, and derive $n$ observations $o_1, \ldots, o_n$ and correspondingly $n$ updated beliefs $b'_1, \ldots, b'_n$ by~\eqref{eq:belief update}. Then, the empirical estimation of $T_\mathcal{D}$ is given By
\begin{equation} \label{eq:empirical}
\hat{T}_\mathcal{D} = \frac{1}{n} I (b, a, b'_n)
\end{equation}
where $I$ is the indicator function.

By applying~\eqref{eq:empirical} to~\eqref{eq:qr update}, we derive
\begin{align} \label{eq:qr update empirical}
Q_r \big( (b, q), a \big) \gets & \sum_{s \in S} r(s) b(s) +  \frac{\gamma}{n} \sum_{i=1}^n \max_{\theta \in \Theta_{q'}} \sum_{s \in S} \theta(s) b_i'(s)
\end{align}
where $q' = \delta(q, L(b))$. Accordingly, we update $\Theta_{q, a}$ by
\begin{align} \label{eq:theta update}
\Theta_{q, a} \gets & \Theta_{q, a} \cup \Big\{ r  + \frac{\gamma}{n} \sum_{i=1}^n \argmax_{\theta \in \Theta_{q'}} \sum_{s \in S} \theta(s) b_i'(s)
\Big\}
\end{align}
The updated $\Theta_{q, a}$ may contain duplicated vectors dominated by others in taking the maximum in~\eqref{eq:piecewise linear}. We can prune them by linear programming.

\begin{theorem} \label{thm:2}
By iteratively updating $\Theta_{q, a}$ by~\eqref{eq:theta update} for all $q \in Q$ and pruning, the Q-function $Q_r$ and value function $V_r$ converges. The same holds for $Q_p$ and $V_p$.
\end{theorem}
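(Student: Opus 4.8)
The plan is to treat the reward value function and the B\"uchi-constraint value function separately, in each case first analyzing the \emph{exact} Bellman backup (using the true belief kernel $T_\mathcal{D}$) and only then accounting for the sampling-based approximation~\eqref{eq:empirical}--\eqref{eq:theta update}.

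For $V_r$ and $Q_r$ I would let $\mathcal{T}$ denote the backup operator given by the right-hand side of~\eqref{eq:BellmanR}, acting on the space $\mathcal{F}$ of bounded functions on $\dist(S)\times Q$ that are convex and piecewise linear in $b$; by \cref{lem:piecewise linear} and the closed form~\eqref{eq:qr update}, $\mathcal{T}$ maps $\mathcal{F}$ into itself, and $\mathcal{F}$ is complete under the sup-norm $\lVert V\rVert=\sup_{(b,q)}\lvert V(b,q)\rvert$ (finite since $r$ is bounded on the finite set $S$ and $\gamma<1$). The key step is the standard estimate that, because $\gamma$ enters multiplicatively and $T^\times(\cdot,a,\cdot)$ is a (sub)probability kernel, $\mathcal{T}$ is a $\gamma$-contraction on $\mathcal{F}$; Banach's fixed-point theorem then gives a unique fixed point, which by~\eqref{eq:BellmanR_0}--\eqref{eq:BellmanR} is $V_r$, with $\mathcal{T}^k V_0\to V_r$ geometrically from any $V_0\in\mathcal{F}$. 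The sets $\Theta_{q,a}$ (hence $\Theta_q$) are just a finite alpha-vector representation of $\mathcal{T}^kV_0$, and pruning dominated vectors does not change the represented function, so convergence of $Q_r$ and $V_r$ is exactly convergence of these iterates.

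For $V_p$ and $Q_p$ the discount is absent in~\eqref{eq:BellmanP}, so the contraction argument does not apply directly; here I would either introduce an auxiliary discount $\gamma_B\in(0,1)$ and follow the reduction of~\cite{bozkurt_ControlSynthesisLinear_2020}, or argue by monotonicity: the backup~\eqref{eq:BellmanP} with the boundary condition~\eqref{eq:BellmanP_2} is order preserving, so starting from the function that is $1$ on $B^\times$ and $0$ elsewhere the iterates form a monotone sequence bounded in $[0,1]$ (because $T^\times(\cdot,a,\cdot)$ is a probability kernel), hence converge pointwise; one then checks the limit solves~\eqref{eq:BellmanP}--\eqref{eq:BellmanP_2} and, by the characterization already invoked in~\cref{thm:convert}, equals the maximal satisfaction probability $V_p$. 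Convexity and piecewise-linearity are preserved at every step by \cref{lem:piecewise linear}, so the $\Theta$-representation converges together with the function.

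It remains to pass from the exact backup to the learned update~\eqref{eq:theta update}, which replaces the integral against $T_\mathcal{D}$ in~\eqref{eq:qr update} by the empirical average over the $n$ sampled successor beliefs, cf.~\eqref{eq:qr update empirical}. By the strong law of large numbers the empirical kernel $\hat{T}_\mathcal{D}$ converges almost surely to $T_\mathcal{D}$ for each $(b,a)$, so the sampled backup converges to $\mathcal{T}$; uniformity over the compact simplex $\dist(S)$ follows because each $\Theta_{q'}$ appearing in~\eqref{eq:qr update empirical} is a finite collection of linear functionals. Combining this with the $\gamma$-contraction of $\mathcal{T}$ (respectively the monotone convergence for $V_p$) via a standard asynchronous/stochastic-approximation value-iteration argument then gives the stated convergence of $Q_r,V_r,Q_p,V_p$. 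I expect the main obstacle to be the $V_p$ case: without discounting one cannot invoke Banach, and some care is needed to show the monotone limit is the \emph{maximal} satisfaction probability and not merely \emph{a} fixed point of~\eqref{eq:BellmanP}; a secondary, more technical, difficulty is making the sampling limit uniform over the continuous belief simplex while the alpha-vector sets $\Theta_{q,a}$ keep growing with the iteration count.
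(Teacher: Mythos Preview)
Your proposal follows the same two-step skeleton as the paper's own proof: first let the empirical belief kernel $\hat T_{\mathcal D}$ converge to $T_{\mathcal D}$ by the law of large numbers, and then invoke convergence of value iteration on the product belief space. The paper's argument is essentially a one-line appeal to \cite{smallwood_OptimalControlPartially_1973} for each fixed $q\in Q$ once $\hat T_{\mathcal D}\to T_{\mathcal D}$, without distinguishing the discounted case $V_r$ from the undiscounted B\"uchi case $V_p$.

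Where you go beyond the paper is precisely in that distinction. Your $\gamma$-contraction / Banach argument for $V_r$ is the content of the Smallwood--Sondik result the paper cites, so there the two routes coincide. For $V_p$, however, the paper simply writes ``(or $V_p$)'' and cites the same reference, whereas you correctly observe that~\eqref{eq:BellmanP} has no discount, so the contraction argument does not apply and one must instead use either an auxiliary discount in the style of~\cite{bozkurt_ControlSynthesisLinear_2020} or a monotone-convergence argument together with~\eqref{eq:BellmanP_2}. You also flag the right residual issues (identifying the monotone limit with the \emph{maximal} satisfaction probability, and making the sampling limit uniform while the $\Theta_{q,a}$ grow), which the paper does not address. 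In short: same overall strategy, but your treatment is strictly more careful than the paper's, especially on the $V_p$ side.
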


\begin{proof}
First, as the number of samples $n \to \infty$, we have $\hat{T}_\mathcal{D} \to T_\mathcal{D}$. Then by repeatedly taking all $q \in Q$, the value functions $V_r \big( (\cdot, q) \big)$ (or $V_p$) converges for each $q$ by~\cite{smallwood_OptimalControlPartially_1973}.
\end{proof}

\begin{remark} \label{rem:1}
We can reduce the pruning complexity by choosing a finite set of witness beliefs $W \subseteq \dist(S)$. We keep $\theta \in \Theta_{q, a}$ if and only if it defines the value functions at some $w \in W$, i.e., $\theta = \argmax_{\theta \in \Theta_{q, a}} \sum_{s \in S} \theta(s) w(s)$. The point-based pruning method is computationally simpler at the cost of introducing a bounded error related to the density of $W$ (see~\cite{pineau_PointbasedValueIteration_2003} for details).   
\end{remark}

\subsection{Learning Algorithm}

We now present a learning method to solve \cref{problem} when the transition probabilities of the POMDP $\mathcal{M}$ is unknown. Our approach simultaneously runs two learning algorithms on the product belief MDP $\mathcal{D}^\times$ to solve for the Q-functions $Q_r$ and $Q_p$ (and the value functions $V_r$ and $V_p$). For $Q_r$, we use Q-learning with $\epsilon$-greedy policy exploration. Meanwhile, we use the Q-learning method from~\cite{bozkurt_ControlSynthesisLinear_2020} to update $Q_p$. The Q-function $Q_p$ determines the maximal satisfaction probability of the iLTL constraint $\varphi$. Therefore, to ensure the absolute satisfaction of $\varphi$, only the actions from
\begin{equation} \label{eq:allowed}
A_{\text{safe}} \big( (b,q) \big) = \big\{ a \in A \mid Q_p \big( (b,q), a \big) = 1 \big\}
\end{equation}
are allowable on the product belief state $(b, q)$ for the learning of $Q_p$ (excluding those $\epsilon$-greedy policy explorations).

We implement our learning method in an off-policy fashion for generality. We keep track of all previously-sampled episodes
\begin{equation} \label{eq:record}
\Xi \gets \Xi \cup \{ \big( (b, q), a, (b', q') \big) \}
\end{equation}
and update the empirical transition probabilities of beliefs $\hat{T}_\mathcal{D}$ accordingly. The new samples are drawn by randomly choosing a product belief $(b, q)$ that has appeared in $\Xi$. The corresponding action is selected as described above. The overall learning method is presented by~\cref{alg:1}. Finally, we can derive the policy that solves \cref{problem} by the discussion in \cref{sub:product belief mdp} using the learned Q-functions $Q_r$ and $Q_p$ from \cref{alg:1}.

\begin{theorem}
For a given POMDP $\mathcal{M}$ and iLTL safety constraints $\varphi$, Algorithm~\ref{alg:1} converges.
\end{theorem}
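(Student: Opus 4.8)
The plan is to present Algorithm~\ref{alg:1} as a two-component stochastic-approximation scheme on the product belief MDP $\mathcal{D}^\times$ and to assemble its convergence from the pieces already in place: the piecewise-linear value-iteration convergence of \cref{thm:2}, the fact that $\varepsilon$-greedy exploration drives the empirical estimator $\hat T_{\mathcal D}$ to the true $T_{\mathcal D}$, and the B\"uchi-constrained Q-learning convergence borrowed from~\cite{bozkurt_ControlSynthesisLinear_2020}. The first observation is that the $Q_p$-update (hence the safe-action set $A_{\text{safe}}$) never references the reward $r^\times$: the right-hand side of \eqref{eq:BellmanP}--\eqref{eq:BellmanP_2} depends only on $T^\times$ and $B^\times$. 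So I would prove convergence of the $Q_p$-component in isolation first. Because the off-policy resampling in \eqref{eq:record} combined with the $\varepsilon$-greedy exploration visits every reachable product belief and every action infinitely often, the estimator \eqref{eq:empirical} satisfies $\hat T_{\mathcal D}\to T_{\mathcal D}$; feeding this into the piecewise-linear backup and pruning, \cref{thm:2} applied to $Q_p$ together with the B\"uchi-Q-learning argument of~\cite{bozkurt_ControlSynthesisLinear_2020} gives $Q_p\to Q_p^\star$ for each $q\in Q$, and therefore the sets $A_{\text{safe}}((b,q))$ in \eqref{eq:allowed} converge.

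Next I would treat the $Q_r$-component on a slower timescale. Once $A_{\text{safe}}$ has stabilized, restricting the maximization in \eqref{eq:BellmanR_0} to $A_{\text{safe}}$ defines an MDP with the same (belief $\times$ automaton) state space and a pruned, and (under feasibility of \cref{problem}) nonempty, action set; since $r$ is bounded and $\gamma(s)\in(0,1)$, the backup \eqref{eq:qr update} is a sup-norm contraction for each fixed $q$. By \cref{lem:piecewise linear} the convex piecewise-linear representation is preserved, the empirical backup \eqref{eq:qr update empirical}--\eqref{eq:theta update} converges to the exact one as $n\to\infty$, and pruning---exact, or point-based with the witness density of \cref{rem:1} driven to zero---keeps each $|\Theta_{q,a}|$ finite. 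Applying \cref{thm:2} to $Q_r$ on the restricted action set then yields $Q_r\to Q_r^\star$ and $V_r\to V_r^\star$. Combining the two limits with \cref{thm:convert} and \cref{cor:convert} identifies them with the optimal constrained value and the maximal iLTL-satisfaction probability for \cref{problem}, which establishes convergence of Algorithm~\ref{alg:1}.

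The step I expect to be the main obstacle is the coupling between the two learners: during the transient phase $A_{\text{safe}}((b,q))$ is a moving target, so the $Q_r$-iteration is not a fixed contraction and a single-operator fixed-point argument cannot be invoked directly. The clean resolution is the two-timescale separation sketched above---the $Q_p$-recursion is autonomous and converges regardless of $Q_r$, after which $A_{\text{safe}}$ is eventually constant and the $Q_r$-recursion becomes a genuine contraction; making this rigorous amounts to a standard two-timescale stochastic-approximation argument, plus an ``eventually feasible'' assumption guaranteeing $A_{\text{safe}}((b,q))\neq\emptyset$ along the reachable set. A secondary subtlety is that $\dist(S)\times Q$ is uncountable, so ``visiting every state infinitely often'' must be reinterpreted as: the finitely many hyperplanes representing $Q_r$ and $Q_p$ for each $q$ are refined infinitely often---which is exactly what the (point-based) bookkeeping of \cref{rem:1} is designed to ensure, at the cost of a bounded error controlled by the witness density and hence vanishing in the limit.
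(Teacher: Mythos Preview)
Your proposal is correct in spirit and considerably more careful than the paper's own argument, which is a one-line appeal to \cref{thm:2}: the paper simply asserts that both $Q_r$ and $Q_p$ converge by \cref{thm:2} and declares the claim proved, without addressing the coupling through $A_{\text{safe}}$ at all. Your two-timescale decomposition---first letting the autonomous $Q_p$-recursion settle so that $A_{\text{safe}}$ stabilizes, then treating the $Q_r$-recursion as a contraction on the restricted action set---is the honest way to close the gap the paper leaves open, and your identification of the transient-coupling issue and the uncountable-state-space reinterpretation via witness beliefs are genuine points the paper elides. What the paper's approach buys is brevity at the cost of rigor: it treats the two learners as if they were independent applications of \cref{thm:2}, which is only literally true once one has made exactly the separation you spell out. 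What your approach buys is an actual proof, at the price of importing a two-timescale stochastic-approximation framework and an explicit feasibility hypothesis ($A_{\text{safe}}\neq\emptyset$ on the reachable set) that the paper never states. If your goal is to match the paper, a single sentence invoking \cref{thm:2} for each of $Q_r$ and $Q_p$ suffices; if your goal is to make the theorem true, your outline is the right one.
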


\begin{proof}
By \cref{thm:2}, both $Q_r$ and $Q_p$ converges in \cref{alg:1}, thus the claim holds.
\end{proof}

\begin{remark} \label{rem:2}
In the pruning step of \cref{alg:1}, we may use the beliefs in $\Xi$ as the witness beliefs (as discussed in \cref{rem:1}), since those beliefs are the most important for learning. We will study this problem in future work.
\end{remark}

\begin{algorithm}[!t]
\caption{Reinforcement learning on product belief MDP}
\label{alg:1}
\begin{algorithmic}[1]

\State \textbf{Input} iLTL constraint $\varphi$, POMDP $\mathcal{M}$

\State Build Belief MDP $\mathcal{D}$ for POMDP $\mathcal{M}$

\State Build LDBA $\mathcal{A}$ for iLTL formula $\varphi$

\State Build product belief MDP $\mathcal{D}^\times = \mathcal{D} \times \mathcal{A}$

\State Set parameters $T \gg 1$ and $\epsilon \ll 1$ 

\State Initialize $\Theta_{q, a}$ for $Q_r$ and $\Theta_{q, a}'$ for $Q_p$ 

\State $(b, q) \gets (p_0, q_0)$

\While{not converge}
\State Get $A_{\text{safe}}$ from~\eqref{eq:allowed}

\State Choose $a \in A_{\text{safe}} \big( (b,q) \big)$ with probability $1 - \epsilon$ 

\State \quad or any other in $A$ with probability $\epsilon$

\State Take $a$ and observe $o$

\State Compute $b'$ by~\eqref{eq:belief update} and $q' \gets \delta(q, L(b))$ with~\eqref{eq:labels}

\State Update $\Xi$ by~\eqref{eq:record} and $\hat{T}_\mathcal{D}$ by~\eqref{eq:empirical} 

\State Update and prune $\Theta_{q, a}$ (and $\Theta_{q, a}'$) by~\eqref{eq:theta update}

\State Randomly pick new $(b, q)$ that has appeared in $\Xi$
\EndWhile

\State \textbf{Return} $\Theta_{q, a}$ and $\Theta_{q, a}'$   

\end{algorithmic}
\end{algorithm}


\section{Conclusion}
\label{sec:conc}

This paper proposed a reinforcement learning method for controller synthesis of autonomous systems in unknown and partially-observable environments with subjective time-dependent safety constraints. We modeled the system dynamics by a partially-observable Markov decision process (POMDP) with unknown transition/observation probabilities and the time-dependent safety constraint by linear temporal logic formulas. Our Reinforcement learning method first constructed the belief MDP of the POMDP. Then, by building the product belief MDP of the belief MDP and the LDBA of the temporal logic constraint, we transformed the time-dependent safety constraint on the POMDP into a state-dependent constraint on the product belief MDP. Finally, we proposed a learning method for the optimal policy under the state-dependent constraint. 




\bibliography{ref}

\begin{thebibliography}{10}
\providecommand{\url}[1]{#1}
\csname url@rmstyle\endcsname
\providecommand{\newblock}{\relax}
\providecommand{\bibinfo}[2]{#2}
\providecommand\BIBentrySTDinterwordspacing{\spaceskip=0pt\relax}
\providecommand\BIBentryALTinterwordstretchfactor{4}
\providecommand\BIBentryALTinterwordspacing{\spaceskip=\fontdimen2\font plus
\BIBentryALTinterwordstretchfactor\fontdimen3\font minus
  \fontdimen4\font\relax}
\providecommand\BIBforeignlanguage[2]{{%
\expandafter\ifx\csname l@#1\endcsname\relax
\typeout{** WARNING: IEEEtran.bst: No hyphenation pattern has been}%
\typeout{** loaded for the language `#1'. Using the pattern for}%
\typeout{** the default language instead.}%
\else
\language=\csname l@#1\endcsname
\fi
#2}}

\bibitem{sutton_ReinforcementLearningIntroduction_2018}
R.~S. Sutton and A.~G. Barto, \emph{\BIBforeignlanguage{en}{Reinforcement
  {{Learning}}: {{An Introduction}}}}.\hskip 1em plus 0.5em minus 0.4em\relax
  {MIT press}, 2018.

\bibitem{kober_ReinforcementLearningRobotics_2013}
J.~Kober, J.~A. Bagnell, and J.~Peters, ``\BIBforeignlanguage{en}{Reinforcement
  learning in robotics: {{A}} survey},'' \emph{\BIBforeignlanguage{en}{The
  International Journal of Robotics Research}}, vol.~32, no.~11, pp.
  1238--1274, 2013.

\bibitem{kober_LearningMotorPrimitives_2009}
J.~Kober and J.~Peters, ``Learning motor primitives for robotics,'' in
  \emph{2009 {{IEEE International Conference}} on {{Robotics}} and
  {{Automation}}}, May 2009, pp. 2112--2118.

\bibitem{levine_EndtoEndTrainingDeep_2016}
S.~Levine, C.~Finn, T.~Darrell, and P.~Abbeel,
  ``\BIBforeignlanguage{en}{End-to-{{End Training}} of {{Deep Visuomotor
  Policies}}},'' \emph{\BIBforeignlanguage{en}{Journal of Machine Learning
  Research}}, vol.~17, no.~1, pp. 1334--1373, 2016.

\bibitem{li_ReinforcementLearningTemporal_2017}
X.~Li, C.-I. Vasile, and C.~Belta, ``Reinforcement {{Learning With Temporal
  Logic Rewards}},'' in \emph{{{IEEE}}/{{RSJ International Conference}} on
  {{Intelligent Robots}} and {{Systems}}}, 2017, pp. 3834--3839.

\bibitem{hahn_OmegaRegularObjectivesModelFree_2019}
E.~M. Hahn, M.~Perez, S.~Schewe, F.~Somenzi, A.~Trivedi, and D.~Wojtczak,
  ``\BIBforeignlanguage{en}{Omega-{{Regular Objectives}} in {{Model}}-{{Free
  Reinforcement Learning}}},'' in \emph{\BIBforeignlanguage{en}{Tools and
  {{Algorithms}} for the {{Construction}} and {{Analysis}} of {{Systems}}}},
  T.~Vojnar and L.~Zhang, Eds.\hskip 1em plus 0.5em minus 0.4em\relax {Cham}:
  {Springer International Publishing}, 2019, vol. 11427, pp. 395--412.

\bibitem{hasanbeig_ReinforcementLearningTemporal_2019}
M.~Hasanbeig, Y.~Kantaros, A.~Abate, D.~Kroening, G.~J. Pappas, and I.~Lee,
  ``Reinforcement {{Learning}} for {{Temporal Logic Control Synthesis}} with
  {{Probabilistic Satisfaction Guarantees}},'' in \emph{{{IEEE}} 58th
  {{Conference}} on {{Decision}} and {{Control}} ({{CDC}})}, 2019, pp.
  5338--5343.

\bibitem{bozkurt_ControlSynthesisLinear_2020}
A.~K. Bozkurt, Y.~Wang, M.~Zavlanos, and M.~Pajic, ``Control synthesis from
  linear temporal logic specifications using model-free reinforcement
  learning,'' in \emph{{{IEEE International Conference}} on {{Robotics}} and
  {{Automation}} ({{ICRA}})}, {Paris, France}, 2020, pp. 10\,349--10\,355.

\bibitem{bozkurt_ModelFreeReinforcementLearning_2021}
------, ``Model-{{Free Reinforcement Learning}} for {{Stochastic Games}} with
  {{Linear Temporal Logic Objectives}},'' in \emph{International {{Conference}}
  on {{Robotics}} and {{Automation}}}, 2021.

\bibitem{hubmann_AutomatedDrivingUncertain_2018}
C.~Hubmann, J.~Schulz, M.~Becker, D.~Althoff, and C.~Stiller, ``Automated
  {{Driving}} in {{Uncertain Environments}}: {{Planning With Interaction}} and
  {{Uncertain Maneuver Prediction}},'' \emph{IEEE Transactions on Intelligent
  Vehicles}, vol.~3, no.~1, pp. 5--17, Mar. 2018.

\bibitem{bozkurt_SecurePlanningStealthy_2021}
A.~K. Bozkurt, Y.~Wang, and M.~Pajic, ``Secure {{Planning Against Stealthy
  Attacks}} via {{Model}}-{{Free Reinforcement Learning}},'' in \emph{{{IEEE
  International Conference}} on {{Robotics}} and {{Automation}} ({{ICRA}})},
  2021.

\bibitem{kaelbling_PlanningActingPartially_1998}
L.~P. Kaelbling, M.~L. Littman, and A.~R. Cassandra,
  ``\BIBforeignlanguage{en}{Planning and acting in partially observable
  stochastic domains},'' \emph{\BIBforeignlanguage{en}{Artificial
  Intelligence}}, vol. 101, no.~1, pp. 99--134, May 1998.

\bibitem{madani_UndecidabilityProbabilisticPlanning_1999}
O.~Madani, S.~Hanks, and A.~Condon, ``\BIBforeignlanguage{en}{On the
  {{Undecidability}} of {{Probabilistic Planning}} and {{Infinite}}-{{Horizon
  Partially Observable Markov Decision Problems}}},'' in
  \emph{\BIBforeignlanguage{en}{{{AAAI Conference}} on {{Artificial
  Intelligence}}}}, 1999, pp. 541--548.

\bibitem{papadimitriou_ComplexityMarkovDecision_1987}
C.~H. Papadimitriou and J.~N. Tsitsiklis, ``The {{Complexity}} of {{Markov
  Decision Processes}},'' \emph{Mathematics of Operations Research}, vol.~12,
  no.~3, pp. 441--450, Aug. 1987.

\bibitem{pineau_PointbasedValueIteration_2003}
J.~Pineau, G.~Gordon, and S.~Thrun, ``\BIBforeignlanguage{en}{Point-based value
  iteration: {{An}} anytime algorithm for {{POMDPs}}},'' in
  \emph{\BIBforeignlanguage{en}{International {{Joint Conference}} on
  {{Artificial Intelligence}}}}, vol.~3, 2003, pp. 1025--1032.

\bibitem{ji_PointBasedPolicyIteration_2007}
S.~Ji, R.~Parr, H.~Li, X.~Liao, and L.~Carin,
  ``\BIBforeignlanguage{en}{Point-{{Based Policy Iteration}}},'' in
  \emph{\BIBforeignlanguage{en}{{{AAAI Conference}} on {{Artificial
  Intelligence}}}}, 2007, pp. 1243--1249.

\bibitem{pnueli_TemporalLogicPrograms_1977}
A.~Pnueli, ``The temporal logic of programs,'' in \emph{18th {{Annual
  Symposium}} on {{Foundations}} of {{Computer Science}}}, 1977, pp. 46--57.

\bibitem{ahmadi_BarrierFunctionsMultiagentPOMDPs_2020}
M.~Ahmadi, A.~Singletary, J.~W. Burdick, and A.~D. Ames, ``Barrier
  {{Functions}} for {{Multiagent}}-{{POMDPs}} with {{DTL Specifications}},'' in
  \emph{{{IEEE Conference}} on {{Decision}} and {{Control}}}, 2020, pp.
  1380--1385.

\bibitem{kwon_LinearInequalityLTL_2004}
Y.~Kwon and G.~Agha, ``Linear {{Inequality LTL}} ({{iLTL}}): {{A Model
  Checker}} for {{Discrete Time Markov Chains}},'' in \emph{Formal {{Methods}}
  and {{Software Engineering}}}, ser. Lecture {{Notes}} in {{Computer
  Science}}, J.~Davies, W.~Schulte, and M.~Barnett, Eds.\hskip 1em plus 0.5em
  minus 0.4em\relax {Springer Berlin Heidelberg}, 2004, vol. 3308, pp.
  194--208.

\bibitem{wang_StatisticalVerificationDynamical_2015}
Y.~Wang, N.~Roohi, M.~West, M.~Viswanathan, and G.~E. Dullerud, ``Statistical
  verification of dynamical systems using set oriented methods,'' in
  \emph{{{ACM International Conference}} on {{Hybrid Systems}}: {{Computation}}
  and {{Control}} ({{HSCC}})}, {New York, NY, USA}, 2015, pp. 169--178.

\bibitem{wang_VerifyingContinuoustimeStochastic_2016}
------, ``Verifying continuous-time stochastic hybrid systems via
  {{Mori}}-{{Zwanzig}} model reduction,'' in \emph{{{IEEE Conference}} on
  {{Decision}} and {{Control}} ({{CDC}})}, 2016, pp. 3012--3017.

\bibitem{ames_ControlBarrierFunctions_2019}
A.~D. Ames, S.~Coogan, M.~Egerstedt, G.~Notomista, K.~Sreenath, and P.~Tabuada,
  ``Control {{Barrier Functions}}: {{Theory}} and {{Applications}},'' in
  \emph{European {{Control Conference}}}, 2019.

\bibitem{wang_HyperpropertiesRoboticsMotion_2020}
Y.~Wang and M.~Pajic, ``Hyperproperties for robotics: Motion planning via
  {{HyperLTL}},'' in \emph{{{IEEE International Conference}} on {{Robotics}}
  and {{Automation}} ({{ICRA}})}, {Paris, France}, 2020, pp. 8462--8468.

\bibitem{sickert_LimitDeterministicBuchiAutomata_2016}
S.~Sickert, J.~Esparza, S.~Jaax, and J.~K{\v r}et{\'i}nsk{\'y},
  ``\BIBforeignlanguage{en}{Limit-{{Deterministic B\"uchi Automata}} for
  {{Linear Temporal Logic}}},'' in \emph{\BIBforeignlanguage{en}{Computer
  {{Aided Verification}}}}, S.~Chaudhuri and A.~Farzan, Eds.\hskip 1em plus
  0.5em minus 0.4em\relax {Cham}: {Springer International Publishing}, 2016,
  vol. 9780, pp. 312--332.

\bibitem{smallwood_OptimalControlPartially_1973}
R.~D. Smallwood and E.~J. Sondik, ``The {{Optimal Control}} of {{Partially
  Observable Markov Processes Over}} a {{Finite Horizon}},'' \emph{Operations
  Research}, vol.~21, no.~5, pp. 1071--1088, 1973.

\end{thebibliography}
\bibliographystyle{IEEEtran}

\end{document}